\documentclass{sig-alternate-per}

\usepackage{cite}
\usepackage{amsmath,amssymb,amsfonts}

\usepackage{graphicx}
\usepackage{textcomp}
\usepackage{xcolor}
\def\BibTeX{{\rm B\kern-.05em{\sc i\kern-.025em b}\kern-.08em
    T\kern-.1667em\lower.7ex\hbox{E}\kern-.125emX}}

\usepackage{subcaption} 
\usepackage{amsmath}
\usepackage{amsfonts}
\usepackage{mathrsfs}
\usepackage{amssymb}
\usepackage{dsfont}
\usepackage{times}
\usepackage{multirow}
\usepackage[none]{hyphenat}
\usepackage{float}
\usepackage{subfig}
\usepackage{iftex}
\usepackage{algorithm}
\usepackage{algpseudocode}
\usepackage{booktabs}
\usepackage{cleveref}

\newtheorem{corollary}{Corollary}

\newtheorem{remark}{\bf Remark}
\newtheorem{assumption*}{Assumption}
\newtheorem{stdassumption*}{Standing Assumption}

\newtheorem{prop}{{Proposition}}

\newtheorem{definition*}{{Definition}}

\usepackage{microtype} 

\begin{document}
%

\title{Toward Reliable Design of LLM-Enabled Agentic Workflows: Optimizing Latency-Reliability-Cost Tradeoffs}

%
%
%
%
%

\numberofauthors{2} 
%
\author{
%
\alignauthor Ya-Ting Yang\\
       \affaddr{New York University}\\
       \affaddr{Brooklyn, NY, USA}\\
       \email{yy4348@nyu.edu}
\alignauthor Quanyan Zhu\\
 \affaddr{New York University}\\
       \affaddr{Brooklyn, NY, USA}\\
       \email{qz494@nyu.edu}
}

\maketitle
\begin{abstract}
Modern AI systems increasingly rely on workflows composed of multiple interacting agents, some powered by large language models (LLMs) and others by conventional computational modules. This paper analyzes the fundamental tradeoffs between latency, reliability, and cost in LLM-enabled agentic workflows. We introduce performance models for both LLM and non-LLM agents that capture the relationship between computational effort and output quality, incorporating the impact of reasoning and output tokens for LLM agents using a parametric exponential reliability function. Then, we study the design of sequential workflows under latency and cost constraints. 
Main results include a water-filling token allocation policy and characterizations of optimal workflow reliability in terms of shadow prices. 
\end{abstract}

\section{Introduction}

With recent advances in artificial intelligence (AI), particularly large language models (LLMs), modern systems increasingly rely on workflows composed of multiple interacting agents. In these systems, agents collaborate to solve complex tasks by exchanging intermediate information and progressively refining their outputs \cite{acharya2025agentic,yang2026internet}. These agents may include LLM-based components as well as conventional computational modules, such as database queries, optimization solvers, and verification tools. 
%
As these workflows grow in scale and complexity, their deployment also becomes increasingly resource-intensive. This raises fundamental challenges in how to configure LLM-enabled systems efficiently and reliably. In practice, system design is often guided by heuristics or costly trial-and-error. Hence, there is a need for mathematical and principle frameworks to analyze and optimize agentic LLM workflows.

From a systems perspective, workflow performance can be characterized by three key quantities: latency, reliability, and cost. Optimizing these performance metrics requires understanding how individual agent characteristics combine to determine overall workflow behavior.
This study develops performance models for LLM-enabled agentic workflows and derives optimal resource allocation policies under latency and cost constraints. We introduce parametric models that capture the relationship between computational effort and output quality for both LLM and non-LLM agents. Then, we focus on sequential workflows, formulate the design problem as maximizing reliability subject to resource budgets, and derive closed-form allocation policies using convex optimization method \cite{boyd2004convex}. Our results provide designers with principled methods for configuring LLM-based systems to achieve desired performance tradeoffs.

\section{LLM Agentic Workflows}

An LLM agentic workflow can be represented as a directed computation graph where nodes correspond to agents and edges represent information flow. Tasks enter the workflow as input messages, propagate through agents, and eventually produce outputs.

\subsection{LLM Agent Performance Metrics}

In practice, the execution of an LLM agent involves two distinct computational stages. The agent first performs internal reasoning to construct a solution to the task. Then, it generates a sequence of tokens that communicates the solution to downstream agents or to the user. Both stages consume computational resources and therefore influence latency, reliability, and cost.
Considering an LLM agent indexed by $j$, we define two key resource variables: $X_j$ denotes the number of ``reasoning tokens'' generated internally during the reasoning process, and $L_j$ denotes the number of ``output tokens'' for the final response. Both quantities are measured in units of tokens. The variable $X_j$ represents the internal computational effort, while $L_j$ determines the length of the communicated response.

\textbf{Latency:} The latency of an LLM agent depends on network conditions, system scheduling, and variability in model execution. We model the latency of agent $j$ as a random variable $T_j$ measured in units of time (e.g., seconds). The latency consists of two components: infrastructure delay and model inference. The infrastructure delay is denoted by $\tau_j$, which captures network transmission and orchestration overhead. This quantity is treated as a random variable with mean $\bar{\tau}_j = \mathbb{E}[\tau_j]$.
During model inference, processing rates differ in stages \cite{kwon2023efficient}. We denote by $\lambda_{\text{th}}$ the average processing rate for reasoning tokens and by $\lambda_{\text{gen}}$ the average generation rate for output tokens. 
%
The total latency of agent $j$ is $T_j = \tau_j + \frac{X_j}{\lambda_{\text{th}}} + \frac{L_j}{\lambda_{\text{gen}}}$,
where the first term captures the stochastic infrastructure delay, and the remaining terms represent deterministic computation time.
Taking expectations yields the mean latency
\begin{equation}
\bar{T}_j = \bar{\tau}_j + (X_j/\lambda_{\text{th}}) + (L_j/\lambda_{\text{gen}}),
\label{eq:mean_latency}
\end{equation}
which we use as the latency metric in subsequent analysis. 

\textbf{Reliability:} The reliability of agent $j$ is defined as the probability that the output produced by the agent is acceptable. 
%
We define the reliability function $\rho_j : \mathbb{R}^2_+ \to [0,1]$ by
$\rho_j(X_j, L_j) = \Pr(\text{output of } j \text{ is acceptable} | X_j, L_j)$.
This function captures how the quality of the agent's output depends on the computational resources allocated to reasoning and generation.
Typically, increasing the reasoning effort $X_j$ improves the quality of the internal solution; increasing the output length $L_j$ may enhance the clarity, explanation, or verification of the result \cite{wang2022self}. To capture diminishing marginal returns in both dimensions, we adopt the following parametric form:
\begin{equation}
\rho_j(X_j, L_j) = \big(1 - e^{-\alpha_j X_j}\big)\big(1 - e^{-\beta_j L_j}\big),
\label{eq:reliability}
\end{equation}
where $\alpha_j > 0$ and $\beta_j > 0$ are agent-specific parameters in units of $(\text{tokens})^{-1}$. This choice ensures that the arguments of the exponential functions are dimensionless and that $\rho_j(X_j, L_j) \in [0,1]$. 

\textbf{Cost:} There are two distinct cost notions that arise in LLM-based systems. First, the user-visible cost is associated with the number of output tokens generated. We denote by $c_{\text{tok}} > 0$ the price per output token, measured in units of currency per token. The resulting user cost of agent $j$ is $C^u_j(L_j) = c_{\text{tok}} L_j$.
Second, the system provider incurs computational cost when processing tokens. Since both reasoning tokens and output tokens must be processed by the model, the total token workload equals $X_j + L_j$. Let $c_{\text{comp}} > 0$ denote the computational cost per processed token (in units of currency per token). The corresponding computational cost is
$C^c_j(X_j, L_j) = c_{\text{comp}}(X_j + L_j)$.

\begin{remark}
    According to OpenAI \cite{kaplan2020scaling}, the floating point of operations (FLOPs) per token for the transformer-based model is: $2\beta + 2n^{\text{layer}} n^{\text{ctx}} n^{\text{attn}}$, where $\beta$ is the number of parameters, $n^{\text{layer}}$, $n^{\text{ctx}}$, and $n^{\text{attn}}$ are the number of layers, maximum input tokens, and the dimension of the attention output, respectively. Hence, $c_{\text{comp}}$ can be expressed as: $c_{\text{comp}}=c_e \cdot \big( 2\beta + 2n^{\text{layer}} n^{\text{ctx}} n^{\text{attn}}\big)$, where $c_e \in \mathbb{R}_{+}$ denotes the monetary computational (energy) cost per FLOP. Note that $c_{\text{comp}}$ and $c_{\text{tok}}$ may also be related.
\end{remark}

\subsection{Non-LLM Agent Performance Metrics}

Non-LLM agents typically perform deterministic or algorithmic computations on structured inputs and do not generate token sequences.
In many cases, the latency $T_i$ of agent $i$ can be modeled using queueing-theoretic abstractions. For instance, if the service time follows an exponential distribution with rate $\lambda_i$ (measured in $(\text{time})^{-1}$), the expected latency is $\mathbb{E}[T_i] = 1/ \lambda_i$.
The reliability of the agent is defined as the probability that the agent produces a correct output given a valid input, denoted by $\rho_i \in [0,1]$. 
The pair $(T_i, \rho_i)$ therefore provides a compact representation of the latency and reliability characteristics of a non-LLM agent. 

\subsection{Workflow Aggregation}
\label{subsec:workflow_config}

Agentic workflows combine heterogeneous agents in different structural configurations. Let $k$ denote an arbitrary agent with latency $T_k$ and reliability $\rho_k \in [0,1]$. The performance of a workflow depends not only on the characteristics of individual agents but also on the structural configuration through which agents interact. We consider three common configurations: sequential composition, parallel composition, and feedback (iterative) loops.

\textbf{Sequential:} In a sequential configuration, the output of one agent becomes the input to the next. 
Consider a pipeline of $n$ agents indexed by $k = 1, \ldots, n$. Since each stage must complete before the next stage begins, the expected workflow latency is $\mathbb{E}[T_{\text{seq}}] = \sum_{k=1}^{n} \mathbb{E}[T_k]$. The workflow produces a correct result only if all agents succeed, yielding the reliability $R_{\text{seq}} = \prod_{k=1}^{n} \rho_k$.

\textbf{Parallel:} In a parallel configuration, multiple agents process the same input simultaneously. The latency of the parallel stage is determined by the critical path. For two agents $i$ and $j$, the expected latency is $\mathbb{E}[T_{i \| j}] = \mathbb{E}[\max(T_i, T_j)]$. Reliability, however, depends on how the outputs are aggregated. If both outputs are required for the workflow to succeed, the reliability is $R_{i \| j} = \rho_i \rho_j$. In contrast, if the workflow succeeds whenever at least one branch produces a correct result, the reliability becomes $R_{i \| j} = 1 - (1 - \rho_i)(1 - \rho_j)$. These two cases correspond to conjunctive and redundant parallel execution patterns, respectively.

\textbf{Feedback:} Many agentic workflows employ iterative reasoning in which agents repeatedly exchange information to refine a solution. Such structures arise in debate systems, verification loops, and iterative plannings. Consider a feedback loop in which agents $i$ and $j$ execute in the sequence $i \to j$ for $K$ iterations. Each iteration has latency $T_i + T_j$, so the expected latency after $K$ iterations is
$\mathbb{E}[T_{\text{fb}}] = K(\mathbb{E}[T_i] + \mathbb{E}[T_j])$. If both agents must produce correct outputs in each iteration, the reliability becomes $R_{\text{fb}} = (\rho_i \rho_j)^K$.

\section{Performance Optimization}

We now turn to the design problem of optimizing agentic workflows. In this work, we focus on the case of sequential composition. It is worth noting that studying sequential workflows is not a loss of generality: more complex architectures constructed from parallel or feedback structures can be viewed as sequential compositions of composite modules, where each module corresponds to a sub-workflow whose latency and reliability are determined by the aggregation rules introduced in Section~\ref{subsec:workflow_config}.

Consider a sequential workflow consisting of agents indexed by $k = 1, \ldots, n$. Let $\mathcal{A}_{\text{LLM}}$ denote the set of LLM agents and $\mathcal{A}_{\text{NLLM}}$ denote the set of non-LLM agents. For non-LLM agents $i \in \mathcal{A}_{\text{NLLM}}$, the expected latency $\mathbb{E}[T_i]$ and reliability $\rho_i$ are fixed parameters determined by the system implementation. For LLM agents $j \in \mathcal{A}_{\text{LLM}}$, the response length $L_j$ is a decision variable chosen by the system designer. For LLM agents, expected latency $\mathbb{E}[T_j(L_j)]$ follows~\eqref{eq:mean_latency} and the reliability $\rho_j(L_j)$ follows~\eqref{eq:reliability}.

Since agents operate sequentially, the expected latency of the workflow equals the sum of individual expected latencies, $\mathbb{E}[T_{\text{wf}}(L)] = \sum_{i \in \mathcal{A}_{\text{NLLM}}} \mathbb{E}[T_i] + \sum_{j \in \mathcal{A}_{\text{LLM}}} \mathbb{E}[T_j(L_j)]$.
Then, assuming independent failures across agents, the workflow reliability equals the product of the reliabilities of all agents in the pipeline.
For non-LLM agents $i \in \mathcal{A}_{\text{NLLM}}$, the reliability $\rho_i \in [0,1]$ is a fixed parameter determined by the system implementation. For LLM agents $j \in \mathcal{A}_{\text{LLM}}$, reliability depends on both the reasoning effort $X_j$ and the output length $L_j$. The overall workflow reliability is therefore
$R_{\text{wf}}(X, L) = \prod_{i \in \mathcal{A}_{\text{NLLM}}} \rho_i \prod_{j \in \mathcal{A}_{\text{LLM}}} \rho_j(X_j, L_j)$.

In the subsequent analysis, we treat the reasoning allocations $X_j$ as fixed parameters determined by the internal reasoning policy of each agent. We therefore introduce the reduced reliability function
$\hat{\rho}_j(L_j) := \rho_j(X_j, L_j)$,
which depends only on the response length $L_j$. Under this assumption, the workflow reliability becomes
$R_{\text{wf}}(L) = \prod_{i \in \mathcal{A}_{\text{NLLM}}} \rho_i \prod_{j \in \mathcal{A}_{\text{LLM}}} \hat{\rho}_j(L_j)$.

\subsection{Latency-Cost-Constrained Design} \label{sec:opt_problem}
Suppose that the workflow must satisfy both a latency constraint $T$ and a user cost budget $C$. The system designer selects the response lengths $L_j$ for the LLM agents to maximize the reliability of the workflow. The resulting design problem can be written as
\begin{align}
\max_{L_j, j \in \mathcal{A}_{\text{LLM}}} & \prod_{j \in \mathcal{A}_{\text{LLM}}} \hat{\rho}_j(L_j) \label{eq:opt_problem} \\
\text{s.t.} & \sum_{j \in \mathcal{A}_{\text{LLM}}} \frac{L_j}{\lambda_{\text{gen}}} \leq T - T_{\text{fixed}}, \sum_{j \in \mathcal{A}_{\text{LLM}}} C^u_j(L_j) \leq C, \label{eq:TC_constraints} \\
& L_j \geq 0, \quad j \in \mathcal{A}_{\text{LLM}},  \label{eq:L_constraints}
\end{align}
where $T_{\text{fixed}} = \sum_{i \in \mathcal{A}_{\text{NLLM}}} \mathbb{E}[T_i] + \sum_{j \in \mathcal{A}_{\text{LLM}}} \big( \bar{\tau}_j + \frac{X_j}{\lambda_{\text{th}}} \big)$
represents the latency that does not depend on the response. 
%
Since the logarithm is a strictly increasing function, maximizing the product in~\eqref{eq:opt_problem} is equivalent to maximizing the sum of logarithms. We have
\begin{equation}
\begin{aligned}
\max_{L_j, j \in \mathcal{A}_{\text{LLM}}} & \sum_{j \in \mathcal{A}_{\text{LLM}}} \log \hat{\rho}_j(L_j) \quad
\text{s.t.}  \text{ constraints } \eqref{eq:TC_constraints}, \eqref{eq:L_constraints}
\end{aligned}\vspace{-1mm}
\label{eq:opt_problem_log}
\end{equation}

\subsection{Optimal Token Allocation}

We now solve the log-equivalent problem~\eqref{eq:opt_problem_log}. We denote by $B_T = T - T_{\text{fixed}}$ the latency budget available for output generation. 
%
Since the reasoning allocations $X_j$ are treated as fixed parameters, the reduced reliability function is
$\hat{\rho}_j(L_j) = \big(1 - e^{-\alpha_j X_j}\big) \big(1 - e^{-\beta_j L_j}\big)$.
Substituting the reliability model in~\eqref{eq:reliability}, the objective function in \eqref{eq:opt_problem_log} becomes
$\sum_{j \in \mathcal{A}_{\text{LLM}}} \log (1 - e^{-\alpha_j X_j}) + \sum_{j \in \mathcal{A}_{\text{LLM}}} \log (1 - e^{-\beta_j L_j})$.
Since $X_j$ is fixed, the first term is constant with respect to the decision variables $L_j$. Consequently, the optimization over response lengths is equivalent to maximizing only the second term. 
\begin{equation}
\begin{aligned}
\max_{L_j, j \in \mathcal{A}_{\text{LLM}}} & \sum_{j \in \mathcal{A}_{\text{LLM}}} \log \big(1 - e^{-\beta_j L_j}\big) \\
\text{s.t.}  & \sum_{j \in \mathcal{A}_{\text{LLM}}} \frac{L_j}{\lambda_{\text{gen}}} \leq B_T,\  \sum_{j \in \mathcal{A}_{\text{LLM}}} c_{\text{tok}} L_j \leq C,\  L_j \geq 0.
\end{aligned}\vspace{-3mm}
\label{eq:opt_problem_simplified}
\end{equation}

\begin{prop}[Water-filling token allocation]
\label{thm:water_filling}
Consider problem~\eqref{eq:opt_problem_simplified} and assume $\beta_j > 0, \forall j \in \mathcal{A}_{\text{LLM}}$. Define the effective token budget as
$B = \min\left\{\lambda_{\text{gen}} B_T, \, C/c_{\text{tok}}\right\}$.
Then the optimal response lengths satisfy the water-filling rule
$L^*_j = \big[ \frac{1}{\beta_j} \log \big(1 + \frac{\beta_j}{\theta}\big) \big]_+$,
where $\theta \geq 0$ is the Lagrange multiplier associated with the effective budget constraint and is chosen so that $\sum_{j \in \mathcal{A}_{\text{LLM}}} L_j^* = B$.
The optimal solution $L^*$ is unique.
\end{prop}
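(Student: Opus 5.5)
Both constraints in~\eqref{eq:opt_problem_simplified} restrict only the total $S=\sum_{j} L_j$: they read $S \le \lambda_{\text{gen}} B_T$ and $S \le C/c_{\text{tok}}$. Their intersection is therefore the single constraint $S \le B$ with $B=\min\{\lambda_{\text{gen}}B_T, C/c_{\text{tok}}\}$. The first step is to replace~\eqref{eq:opt_problem_simplified} by the equivalent problem of maximizing $f(L)=\sum_j \log(1-e^{-\beta_j L_j})$ subject to $\sum_j L_j\le B$ and $L_j\ge 0$. I assume $B>0$; otherwise the only feasible point is $L=0$, where the objective equals $-\infty$.

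\textbf{Concavity and uniqueness.} Write $g_j(L)=\log(1-e^{-\beta_j L})$ on $(0,\infty)$. Then
\[
g_j'(L)=\frac{\beta_j e^{-\beta_j L}}{1-e^{-\beta_j L}}=\frac{\beta_j}{e^{\beta_j L}-1}>0,
\]
and $g_j''(L)=-\beta_j^2 e^{\beta_j L}/(e^{\beta_j L}-1)^2<0$. So each $g_j$ is strictly increasing and strictly concave, with $g_j\to-\infty$ and $g_j'\to+\infty$ as $L\downarrow 0$. Two consequences follow.

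\begin{itemize}
\item Existence: $f$ is continuous on the compact feasible set with values in $[-\infty,0]$. At an interior point such as $L_j=B/n$ it is finite, so a maximizer exists and has every $L_j^*>0$.
\item Uniqueness: $f$ is strictly concave on the convex set where it is finite. Two distinct maximizers would make their midpoint strictly better, a contradiction.
\end{itemize}

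\textbf{KKT conditions.} Slater's condition holds, for instance at $L_j=B/(2n)$. Hence the KKT conditions are necessary and sufficient, and a multiplier $\theta\ge 0$ exists for $\sum_j L_j\le B$. Stationarity with multipliers $\mu_j\ge 0$ for $L_j\ge 0$ gives $g_j'(L_j^*)=\theta-\mu_j$. Since $L_j^*>0$, complementary slackness forces $\mu_j=0$, so
\[
\frac{\beta_j}{e^{\beta_j L_j^*}-1}=\theta .
\]
This already forces $\theta>0$. Solving gives $e^{\beta_j L_j^*}=1+\beta_j/\theta$, that is, $L_j^*=\frac{1}{\beta_j}\log(1+\beta_j/\theta)$. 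This quantity is automatically positive, so the positive part $[\cdot]_+$ is harmless and matches the stated form. Because $\theta>0$, complementary slackness makes the budget tight: $\sum_j L_j^*=B$. Equivalently, $f$ is strictly increasing in each coordinate, so any slack could be spent to improve $f$.

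\textbf{Choice of $\theta$.} The remaining step is to show that $\theta$ is well defined by the budget equation. The map $h(\theta)=\sum_j \frac{1}{\beta_j}\log(1+\beta_j/\theta)$ is continuous and strictly decreasing on $(0,\infty)$. It tends to $+\infty$ as $\theta\downarrow0$ and to $0$ as $\theta\to\infty$. So for each $B>0$ there is exactly one $\theta$ with $h(\theta)=B$, which pins down the claimed water-filling solution, consistent with the uniqueness above.

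The main obstacle is the boundary behaviour at $L_j=0$, where $g_j=-\infty$ and $g_j'$ is unbounded, so the textbook KKT theorem does not apply directly. I would handle this by first proving that every maximizer lies in the open orthant. Then I would apply KKT to the problem restricted to $\{L_j\ge\epsilon\}$ for small $\epsilon>0$, where $f$ is smooth and the constraint $L_j\ge\epsilon$ is inactive at the optimum.
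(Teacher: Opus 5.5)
Your proof is correct and takes the same route as the paper: you collapse the two constraints into the single budget $\sum_j L_j \le B$, solve the KKT stationarity condition $\beta_j/(e^{\beta_j L_j}-1)=\theta$, and use strict concavity for uniqueness. Your version is more careful than the paper's: you establish existence, show every $L_j^*>0$ (so $\theta>0$, the budget binds, and the $[\cdot]_+$ is never active), justify KKT despite the singularity of the objective at $L_j=0$, and show that the budget equation determines $\theta$ uniquely.
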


\begin{proof}
The constraints in \eqref{eq:opt_problem_simplified}imply that the feasible set of \eqref{eq:opt_problem_simplified} is equivalent to $\{L_j \geq 0 : \sum_j L_j \leq B\}$.
Then, we apply the Karush-Kuhn-Tucker (KKT) conditions by introducing a multiplier $\theta \geq 0$ for $\sum_j L_j \leq B$ and multipliers $\nu_j \geq 0$ for nonnegativity constraints. The Lagrangian is $\mathcal{L}(L, \theta, \nu) = \sum_j \log(1 - e^{-\beta_j L_j}) - \theta (\sum_j L_j - B_L) + \sum_j \nu_j L_j$.
The KKT stationarity condition gives
$\partial \mathcal{L}/\partial L_j = (\beta_j e^{-\beta_j L_j})/(1 - e^{-\beta_j L_j}) - \theta + \nu_j = 0$.
Consider an agent with $L_j > 0$. Complementary slackness implies $\nu_j = 0$, yielding $\beta_j e^{-\beta_j L_j} / (1 - e^{-\beta_j L_j}) = \theta$. Rearranging this equation gives 
$L_j = \frac{1}{\beta_j} \log(1 + \beta_j / \theta)$. If this value is negative, the nonnegativity constraint becomes active and $L_j^* = 0$. Combining these two cases yields the water-filling rule.
As the objective function is strictly concave and the feasible set is convex, KKT conditions are necessary and sufficient for optimality. The solution is unique.
\end{proof}

\begin{remark}[Shadow Price Interpretation]
\label{rem:shadow_price}
The multiplier $\theta$ in Proposition~\ref{thm:water_filling} admits a natural economic interpretation as the shadow price of the token budget, representing the marginal value of an additional token in terms of log reliability. From the KKT conditions, every agent with $L_j^* > 0$ satisfies $\frac{\partial}{\partial L_j} \log \rho_j(X_j, L_j^*) = \theta$,
which implies that the optimal allocation equalizes the marginal gain in log reliability per token across active agents.
\end{remark}

\begin{corollary}[Optimal workflow reliability]
\label{cor:optimal_reliability}
Let $L^*$ be the optimal token allocation characterized in Proposition~\ref{thm:water_filling}. The corresponding optimal workflow reliability is thus
$R_{\text{wf}}^* = \prod_{j \in \mathcal{A}_{\text{LLM}}} \rho_j(X_j, L_j^*) = \prod_{j \in \mathcal{A}_{\text{LLM}}} \big(1 - e^{-\alpha_j X_j}\big) \frac{\beta_j}{\beta_j + \theta}$,
where $\theta$ is the effective shadow token price.
\end{corollary}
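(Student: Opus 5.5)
The plan is to substitute the water-filling solution of Proposition~\ref{thm:water_filling} directly into the reliability model~\eqref{eq:reliability}. The key quantity is $1-e^{-\beta_j L_j^*}$ for each LLM agent. The reasoning factor $1-e^{-\alpha_j X_j}$ is fixed, so it passes through unchanged, and the workflow reliability is just the product over $j\in\mathcal{A}_{\text{LLM}}$, per the sequential aggregation rule. (The non-LLM factors $\prod_i \rho_i$ are omitted here, as in the statement. I would note that they multiply the result if one wants the full $R_{\text{wf}}$.)

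First I treat active agents, those with $L_j^*>0$. By Proposition~\ref{thm:water_filling}, $\beta_j L_j^* = \log(1+\beta_j/\theta)$. Hence
\[
e^{-\beta_j L_j^*} = \frac{\theta}{\theta+\beta_j},
\qquad
1-e^{-\beta_j L_j^*} = \frac{\beta_j}{\beta_j+\theta}.
\]
Equivalently, the stationarity condition $\beta_j e^{-\beta_j L_j}/(1-e^{-\beta_j L_j})=\theta$ from the proof can be solved for $e^{-\beta_j L_j}$. Multiplying by the reasoning factor gives $\rho_j(X_j,L_j^*)=(1-e^{-\alpha_j X_j})\,\beta_j/(\beta_j+\theta)$. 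Taking the product yields the claimed formula.

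The main obstacle is the role of the positive-part operator and of $\theta$. For $\theta>0$ and $\beta_j>0$, the quantity $\log(1+\beta_j/\theta)$ is strictly positive. So the clipping in Proposition~\ref{thm:water_filling} never binds, and every agent is active whenever $B>0$. I would state this explicitly, since it justifies applying the closed form to all $j$.

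I would then check that $\theta>0$. For $B>0$, the marginal log-reliability $\beta_j e^{-\beta_j L}/(1-e^{-\beta_j L})$ is positive for all finite $L$. The budget is therefore tight, and $\theta$ equals this positive marginal value. This is consistent with the shadow-price interpretation in Remark~\ref{rem:shadow_price}.

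The degenerate case $B=0$ must be handled separately. There $L_j^*=0$, the reliability is $0$, and this matches the formula in the limit $\theta\to\infty$. With that caveat recorded, the corollary follows by direct substitution.
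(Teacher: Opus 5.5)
Your proposal is correct and takes essentially the same approach as the paper: both substitute the water-filling solution (equivalently, solve the stationarity condition) to get $e^{-\beta_j L_j^*} = \theta/(\beta_j+\theta)$, multiply by the fixed reasoning factor $1-e^{-\alpha_j X_j}$, and take the product. Your additional points are sound details the paper leaves implicit: that $\theta>0$ whenever $B>0$, so the positive-part clipping never binds and the closed form applies to every agent; the $B=0$ limiting case; and the omitted non-LLM factors $\prod_i \rho_i$.
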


\begin{proof}
The proof follows the model in \eqref{eq:reliability} and the optimal allocation in Proposition~\ref{thm:water_filling} that gives $e^{-\beta_j L_j^*} = \theta / (\beta_j + \theta)$.
\end{proof}




\section{Numerical Experiments}

\begin{figure}
    \centering
    \includegraphics[width=0.95\linewidth]{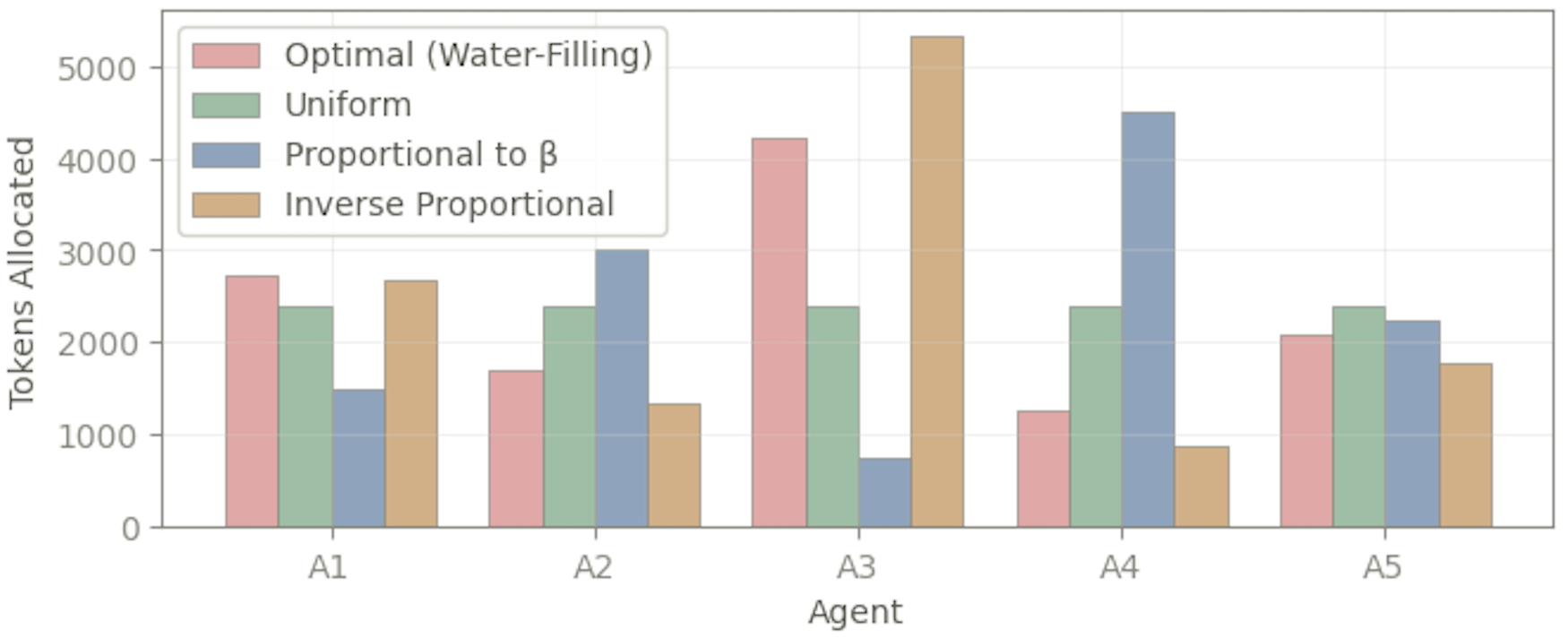}\vspace{-3mm}
    \caption{Output token allocations under different strategies.}
    \label{fig:allocation} \vspace{-3mm}
\end{figure}

\begin{figure}
    \centering
    \includegraphics[width=0.95\linewidth]{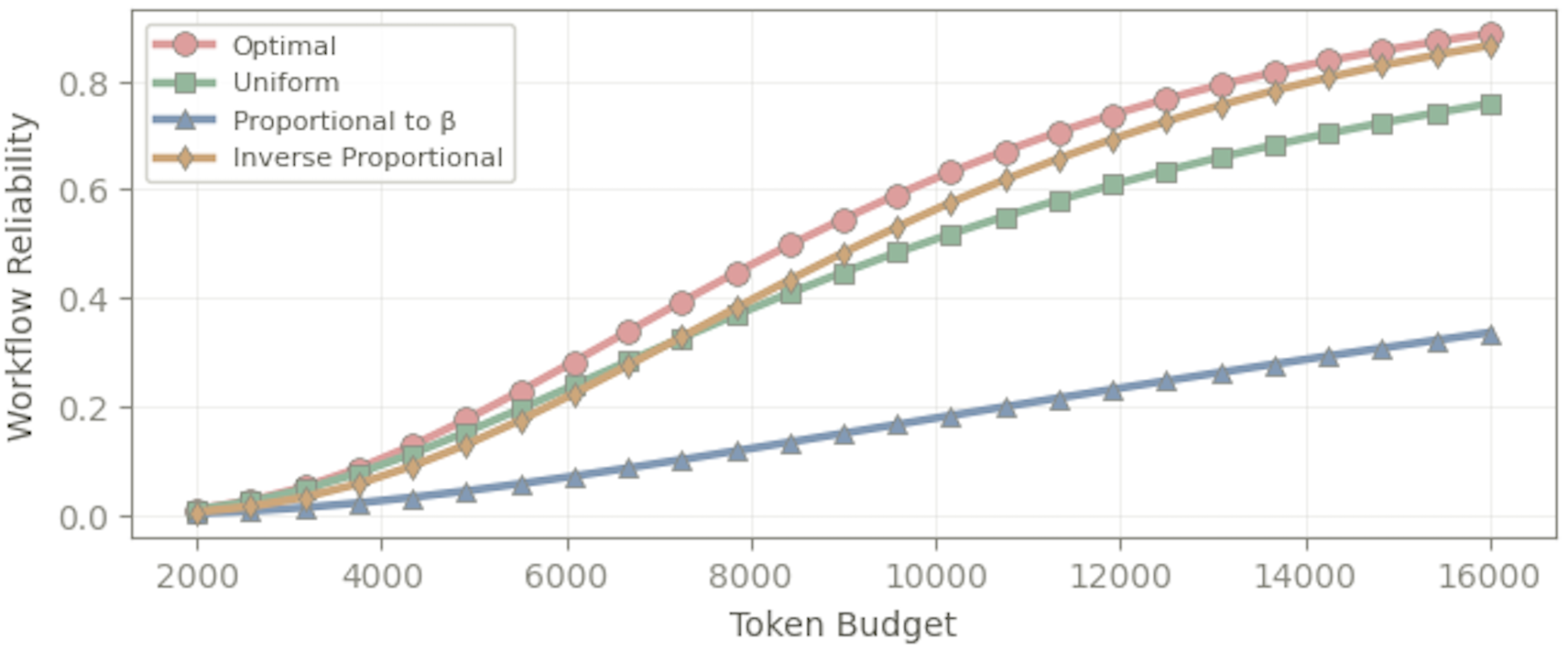}\vspace{-3mm}
    \caption{Workflow reliability under different token budgets.}
    \label{fig:reliability} \vspace{-6mm}
\end{figure}
We illustrate the water-filling allocation policy of Proposition~\ref{thm:water_filling} through a numerical example. Consider a sequential workflow comprising $n = 5$ LLM agents with heterogeneous reliability parameters $\beta_j \in \{0.001, 0.002, 0.0005, 0.003, 0.0015\}$. The total output token budget is $B = 12000$ tokens.
Figure~\ref{fig:allocation} compares the optimal water-filling allocation against three baseline strategies: uniform (equal tokens to all agents), proportional (tokens $\propto \beta_j$), and inverse proportional (tokens $\propto 1/\beta_j$). 
The water-filling rule allocates more tokens to agents with lower values of $\beta_j$. This result reflects that agents with higher $\beta_j$ quickly reach high reliability with fewer tokens, while agents with lower $\beta_j$ require more tokens to achieve comparable levels. 
Figure~\ref{fig:reliability} shows the reliability of the workflow under different token budgets and strategies; the optimal allocation rule consistently dominates all baseline strategies.

\section{Conclusion}
This work proposed a framework for analyzing and optimizing LLM-enabled agentic workflows under latency and cost constraints. We introduced performance models quantifying trade-offs among reliability, latency, and cost, and derived closed-form optimal token allocation policies for sequential workflows. The optimal allocation equalizes marginal gains in log reliability across agents, with the allocation determined by a shadow price that reflects the scarcity of the token budget. Future work includes joint optimization of reasoning and response length, as well as adaptive allocation policies that dynamically adjust token budgets based on intermediate results.

\bibliographystyle{abbrv}
\bibliography{ref}

\end{document}